\newcommand{\mt}[1]{\mathbf{#1}} 
\newcommand{\vt}[1]{\bm{\mathrm{#1}}} 
\definecolor{comment}{gray}{0.4}
\newcommand{\cfd}[1]{{\color{orange} #1}} 
\newcommand{\es}[1]{{\color{red}#1}} 
\newcommand{\y}{\vt{y}} 
\newcommand{\x}{\vt{x}} 
\newcommand{\A}{\mt{A}} 
\newcommand{\rA}{\vt{a}^\T} 
\newcommand{\cA}{\vt{a}} 
\newcommand{\Axi}{[\A\x]_i} 
\newcommand{\yi}{y_i} 
\newcommand{\xj}{x_j} 
\newcommand{\vtheta}{\vt{\theta}} 
\newcommand{\vthetai}{\theta_i} 
\newcommand{\Id}{\mt{I}} 
\newcommand{\R}{\mathbb{R}}
\newcommand{\N}{\mathbb{N}}
\newcommand{\s}[1]{\mathcal{#1}}
\newcommand{\T}{{\sf T}}        
\newcommand{\ind}{\mathds{1}}   
\newcommand{\dotp}[2]{\langle #1, #2 \rangle}   
\DeclareMathOperator{\spann}{span}  
\DeclareMathOperator{\pspan}{pos}  
\DeclareMathOperator{\rank}{rank}  
\DeclareMathOperator*{\argmin}{argmin} 
\DeclareMathOperator*{\argmax}{argmax} 
\DeclareMathOperator{\st}{s.t.}
\DeclareMathOperator{\Gap}{Gap}
\DeclareMathOperator{\gap}{gap}
\newcommand{\BV}{\mathrm{BV}} 
\newcommand{\NN}{\mathrm{NN}} 
\newcommand*{\mydots}{.\kern-0.075em.\kern-0.075em.} 
\def\myvdots{\vbox{\baselineskip=3pt \lineskiplimit=0pt 
\kern6pt \hbox{.}\hbox{.}\hbox{.}}} 
\newcommand\blfootnote[1]{%
  \begingroup
  \renewcommand\thefootnote{}\footnote{#1}%
  \addtocounter{footnote}{-1}%
  \endgroup
}
\begin{document}
\title{Accelerating Non-Negative and Bounded-Variable Linear Regression Algorithms with Safe Screening}
\titlerunning{Accelerating Non-Negative and Bounded-Variable Linear Regression}

 \author{Cassio F.~Dantas\inst{1}\orcidID{0000-0002-1934-0625} \and
 Emmanuel Soubies\inst{2}\orcidID{0000-0003-0571-6983} \and
 C\'{e}dric F\'{e}votte\inst{2}\orcidID{0000-0003-3801-5534}}

 \authorrunning{C. F. Dantas et al.}

 \institute{INRAE, TETIS, Univ Montpellier, France. 
 \email{cassio.fraga-dantas@inrae.fr}
 \and
 CNRS, IRIT, University of Toulouse, France
}

\maketitle              

\begin{abstract}
%
Non-negative and bounded-variable 
linear regression problems arise in a variety of applications in machine learning and signal processing.
%
In this paper, we propose a technique to accelerate existing solvers for these problems by identifying saturated coordinates in the course of iterations.
This is akin to safe screening techniques previously proposed for sparsity-regularized regression problems. 
%
The proposed strategy is \emph{provably safe} as it provides theoretical guarantees that the identified coordinates are indeed saturated in the optimal solution.
%
Experimental results on synthetic and real data show compelling accelerations for both non-negative and bounded-variable problems.
\blfootnote{${}^\star$ Code is available at: \url{https://github.com/cassiofragadantas/NN-BV_Screening}} 
\keywords{Safe screening \and linear regression \and non-negative \and bounded-variable.} 
\end{abstract}

\section{Introduction}

Due to their fundamental importance in many fields, box-constrained linear regression problems---including {in the large sense} the popular non-negative least squares (NNLS) problem---have received  considerable attention for several decades~\cite{lawson1974solving,stark1995bounded,chen2010nonnegativity}. They are in particular relevant for under-determined linear regression, when the number of variables is larger than the number of measurements. For example, it has been shown that for a variety of problems, the sole non-negativity constraint can be as efficient as sparsity-based regularization~\cite{slawski2013non},\cite[Proposition 4.1]{boyer2019representer}.



Several algorithms have been proposed to address such regression problems. Let us in particular mention the seminal active set approach of~\cite{lawson1974solving} for NNLS, which generalizes to bounded-variable least squares (BVLS) \cite{stark1995bounded}. Accelerated variants were proposed by \cite{bro1997fast,van2004fast}. Other methods for NNLS include projected gradient and Newton methods~\cite{johansson2006application,polyak2015projected}, coordinate descent algorithms~\cite{franc2005sequential}, or interior point approaches~\cite{bellavia2006interior}. They come with different strengths and limitations and we refer the reader to~\cite{chen2010nonnegativity} for a comparative discussion.


\paragraph{Contributions and outline} In this work, we propose a generic strategy to accelerate existing solvers for box-constrained linear regression. It relies on the early identification of saturated coordinates (those attaining the box limits in the solution vector) during the course of iterations, akin to {safe screening} techniques for sparse-regularized regression problems~\cite{Ndiaye2017,Dantas2021}. The general optimization problem addressed in the paper is formulated in~\Cref{sec:pbFormulation}. It takes as a special case the {\em safe squeezing} approach of \cite{Elvira2020} for $\ell_\infty$-regularized problems (see the details in  \Cref{app:L_infinity}). In particular, our framework handles non-symmetric bounds (including an infinite upper bound for the nonnegative case) and can deal with a broad class of data-fidelity functions beyond least squares.
  Then, we show in~\Cref{sec:screening} how some saturated coordinates in the solution vector can be \emph{safely} identified from a primal-dual feasible pair of vectors. This allows us to derive a generic dynamic safe screening procedure for box-constrained linear regression problems. In \Cref{sec:dual_update}, we focus on the dual update step. Although a standard dual-scaling can be deployed for the bounded-variable case, it turns out to be ineffective for the non-negative case. As such, we introduce a new dual update strategy, termed as dual translation, to compute relevant feasible dual points.
Finally, numerical experiments are reported in~\Cref{sec:experiments}. They demonstrate how our proposed approach can significantly accelerate various solvers from the literature.




\paragraph{Notations} 
For $n \!\in\! \mathbb{N}$, we denote by $[n]$ the set $\{1, \dots, n\}$. 
The $i$th entry of a vector $\vt{z} \in \R^n$ is denoted $z_i$ (or sometimes
$[\vt{z}]_i$ to avoid ambiguities).
Given a subset of indices $\s{S} \subseteq [n]$ with cardinality $|\s{S}|= s$,  $\vt{z}_\s{S} \in \R^{s}$ denotes the restriction of $\vt{z}$ to its entries indexed by the elements of $\s{S}$. 
For a matrix $\A \in \R^{m \times n}$, $\cA_j \in \R^m$ stands for its $j$th column
and $\A_{\s{S}} \in \R^{m \times s}$ is the matrix formed out of the columns of $\A$ indexed by the set $\s{S}$. 
Vector inequalities are taken coordinate-wisely, i.e.,   $\vt{a} \leq \vt{b}$ means that $a_i \leq b_i, \forall i$. 
Given two vectors $\vt{l} \in \R^n$ and $\vt{u}\in \R^n$, we define the set ${[\vt{l},\vt{u}]} = [l_1,u_1] \times \cdots \times [l_n,u_n]$.
We use the notation $[\vt{z}]^+$ (resp., $[\vt{z}]^-$)  to refer to the
positive (resp., negative) part operation defined as $\max(0, z_i )$ (resp., $\min(0,z_i)$) for all $i \in [n]$. For a convex function $f : \R^n \rightarrow (-\infty,+\infty]$, $f^*(\vt{u}) = \sup_{\vt{z} \in \R^n} \left< \vt{z}, \vt{u} \right> - f(\vt{z})  $ denotes its Fenchel-Legendre transform (or conjugate function).

\section{Box-constrained linear regression}\label{sec:pbFormulation}

Given a matrix $\A \in \R^{m \times n}$ and a data vector $\y\in \R^m$, we consider the  generic box-constrained  linear regression problem 
\begin{align}\label{eq:LR} 
    \x^\star  \in \argmin_{\x \in \R^n} ~ & P(\x) := \sum_{i=1}^m f(\Axi ;y_i) 
    \\ \st ~ & \vt{l} \leq \x \leq \vt{u} \nonumber
\end{align}
where $\vt{l}\in \R^n$ and $\vt{u} \in \bar{\R}^n$ with $\bar{\R} = \R \cup \{\infty\}$  the Alexandroff extension of the real number line. Given $y$, the function $f(z;y)$ is assumed proper, lower semi-continuous, convex, and differentiable with respect to (w.r.t.) $z$. We also assume it has Lipschitz gradient with constant $1/\alpha$.
We define $F(\vt{z};\y) = \sum_i f(z_i;y_i)$, referred to as {\em loss function}. In the rest of the paper, it will be treated as a function of $\vt{z}$ only. Problem~\eqref{eq:LR} encompasses both  bounded-variable linear regression (BVLR) when $\vt{u} \in \R^n$ and non-negative linear regression (NNLR)\footnote{We use the abbreviation LR (for linear regression) to emphasize that our framework is more general than least-squares (LS) regression and can deal with a broader class of functions $f$ than the quadratic distance used in the LS case.} when $l_i=0$ and $u_i = \infty$ $\forall i\in [n]$. Moreover, our framework can also account for mixed constraints where some entries of $\vt{u}$ are finite and the others are infinite.




\section{Early identification of saturated variables}\label{sec:screening}

Building upon the Gap safe screening technique~\cite{Ndiaye2017} for sparse linear regression, we propose a generic approach (\Cref{alg:solver_screening}) to accelerate solvers for~\eqref{eq:LR} through the early identification of saturated coordinates (i.e.,  identification of $j\in [n]$ such that $x^\star_j=l_j$ or $x^\star_j=u_j$ when $u_j < \infty$).

\subsection{Dual problem}\label{sec:dual_prob}

Let $\s{J}^u_\infty := \{j \in [n] : u_j = \infty\}$ denotes the set of indices for which the upper bound constraint in~\eqref{eq:LR} is infinity.  Then, the dual problem of~\eqref{eq:LR} reads 
\begin{equation} \label{eq:LR_dual}
    \vtheta^\star = \argmax_{\vtheta \in  \s{F}_D}~  D(\vtheta) 
\end{equation}
where the dual objective function and the dual feasible set are  given by (see \Cref{app:dual_problems})
\begin{align} 
    D(\vtheta) &=  -\sum_{i=1}^m f^*(-\vthetai;y_i) - \sum_{j =1 }^n l_j[\A^\T\vtheta]^-_j  \notag \\
&   \quad  \qquad \qquad  \qquad   \qquad   - \sum_{j \in [n]\backslash  \s{J}^u_\infty} u_j [\A^\T\vtheta]^+_j ,  \label{eq:Dual_obj} \\
    \s{F}_D &= \{ \vtheta \in \R^m : \forall j \in \s{J}^u_\infty, \;  \cA_j^\T \vtheta \leq 0 \}. \label{eq:Dual_set}
\end{align}
Note that the dual solution $\vtheta^\star$ is unique (thanks to the differentiability of $f$).

From~\eqref{eq:Dual_obj} and~\eqref{eq:Dual_set}, we see that when $\vt{u} \in \R^n$ (BVLR), the dual problem is unconstrained, that is $\s{F}_D= \R^m$. In contrast, when $\vt{l} = \mathbf{0}$  and all the entries of $\vt{u}$ are equal to $\infty$ (NNLR), the dual cost function simplifies as $D(\vtheta) = - \sum_{i=1}^m f_i^*(-\vthetai;y_i)$ and the dual feasible set reads $\s{F}_D = \{ \vtheta \in \R^m : \A^\T \vtheta \leq 0 \}$.


\begin{remark}
{Note that if $\vt{u} = -\vt{l} = \delta \vt{1}_m$, the second and third terms of \eqref{eq:Dual_obj} sum up to $\delta \|\A^\T \vtheta\|_1$ and we obtain a form of Lasso problem. 
More generally, the dual problem of BVLR  can be seen as a further generalization of the Lasso, where the penalization on the vector $\A^\T \vtheta \in \R^n$ is weighted by $\vt{l}$ (resp., $\vt{u}$) for its negative (resp., positive) entries.}
\end{remark}


\subsection{Safe identification of saturated variables}

From the primal~\eqref{eq:LR} and dual~\eqref{eq:LR_dual} problems, the first-order primal-dual  optimality conditions for a primal-dual solution pair $(\x^\star,\vtheta^\star) \in [\vt{l},\vt{u}] \times \s{F}_D$ are given by (see \Cref{sec:opt_cond}):
\begin{align}
    \label{eq:optimality1}
    &\forall i \in [m],~ \vthetai^\star = -f'([\A\x^\star]_i ; y_i) ,\\
    \label{eq:optimality2}
    &\forall j \in [n],~ \cA_j^\T \vtheta^\star \in 
    \left\lbrace
    \begin{array}{cl}
         \left(-\infty, 0 \right] & \text{if } ~\xj^\star= l_j \\
         \left[ 0, +\infty \right) & \text{if } ~\xj^\star= u_j \\
         \{0\} & \text{if } ~\xj^\star  \in (l_j, u_j) 
    \end{array} \right. .
\end{align}
Let us emphasize that, for NNLR, the second case of~\eqref{eq:optimality2} never occurs (as $\xj^\star < \infty = u_j$).

Upon knowledge of the dual solution $\vtheta^\star$,  the optimality condition \eqref{eq:optimality2} (known as sub-differential inclusion) constitutes a natural criterion to identify saturated coordinates of the primal solution $\x^\star$. More precisely, we have
\begin{equation}\label{eq:screen_cond}
\begin{array}{rl}
            \displaystyle    \forall j \in [n], \; \cA_j^\T\vtheta^\star < 0   &\implies \xj^\star = l_j ,\\ 
        \displaystyle  \forall j \in [n]\backslash \s{J}_\infty^u, \;  \cA_j^\T\vtheta^\star > 0    &\implies \xj^\star = u_j.
\end{array}
\end{equation}
Although this  criterion cannot be used in practice as the dual solution  $\vtheta^\star$ is not known in advance, a relaxed version of it can be obtained with only a  partial knowledge of the location of $\vtheta^\star$. More precisely, upon knowledge of a region $\s{R}\subset \R^m$ such that  $\vtheta^\star \!\in\! \s{R}$---referred to as \textit{safe region}---we can define a weaker version of~\eqref{eq:screen_cond} as 
\begin{equation}\label{eq:screen_cond_R}
    \begin{array}{rl}
     \displaystyle   \forall j \in [n], \; \max_{\vtheta' \in \s{R} } \, \cA_j^\T\vtheta'  < 0   &\implies \xj^\star =  l_j ,\\
    \displaystyle  \forall j \in [n]\backslash \s{J}_\infty^u, \;  \min_{\vtheta' \in \s{R} } \, \cA_j^\T\vtheta' > 0   &\implies \xj^\star =  u_j .
    \end{array} 
\end{equation}
Clearly, the smaller the region $\s{R} \ni \vtheta^\star$, the larger the number of saturated variables that can be identified. A convenient and efficient
choice for $\s{R}$ is presented in the next section.

\subsection{The Gap safe sphere}

The Gap safe sphere---initially proposed by \cite{Ndiaye2017} in the context of sparse linear regression---is defined  for any primal-dual pair  $(\x,\vtheta) \in [\vt{l},\vt{u}] \times \s{F}_D$ by 
\begin{align}
\s{B}(\vtheta,r),  \text{ with } r = \sqrt{\frac{2\Gap(\x,\vtheta)}{\alpha}}
\label{eq:GAP_Safe_sph}
\end{align}
where the duality gap is given by
\begin{align} \label{eq:Dual_Gap}
    \Gap(\x,\vtheta) = P(\x) - D(\vtheta).
\end{align}
Let us recall that $\alpha$ in~\eqref{eq:GAP_Safe_sph} is the inverse of the Lipschitz constant of the gradient of $f$. Equivalently, this means that $-f^\star$ (and hence $D$) is $\alpha$-strongly concave. This safe region
 leads to state-of-the-art screening performances due, in particular, to two key properties.
\begin{itemize}
    \item A simple geometry which allows to simplify~\eqref{eq:screen_cond_R} as
    \begin{equation}\label{eq:safe_rule}
    \begin{array}{rl}
          \displaystyle \forall j \in [n], \; \cA_j^\T\vtheta < - r\|\cA_j\|_2  &\implies   \xj^\star =  l_j, \\
          \displaystyle \forall j \in [n]\backslash \s{J}_\infty^u, \;  \cA_j^\T\vtheta > \phantom{-} r\|\cA_j\|_2   &\implies   \xj^\star =  u_j.
    \end{array} 
     \end{equation}
    This significantly limits the computational overhead that is introduced when testing the validity of this criterion (lines~\ref{alg:line:L_sat_var} and~\ref{alg:line:U_sat_var} in \Cref{alg:solver_screening}).
    \item It vanishes when the duality gap tends to zero. In other words, if strong duality holds (i.e., $P(\x^\star) - D(\vtheta^\star) =0$), then the radius of the Gap safe sphere vanishes as the iterates $\{\x^k,\vtheta^k\}$ converge to $\{\x^\star,\vtheta^\star\}$. 
\end{itemize}

A proof that $\s{B}(\vtheta,r)$ is indeed a safe region (i.e., that $\vtheta^\star \in \s{B}(\vtheta,r)$) was provided by~\cite[Theorem 6]{Ndiaye2017} in the context of sparse linear regression. Actually, this proof can be directly applied to the linear regression problems with box constraints considered in the present paper.

\begin{remark}
Note that the Gap safe sphere can also be defined using \emph{local} strong concavity bounds of $D$ computed on well-chosen subsets of the domain~\cite[Theorem 5]{Dantas2021}. This would allow to extend the applicability of \Cref{alg:solver_screening} (following~\cite{Dantas2021}) to a more general class of functions $f$ such as the $\beta$-divergences with $\beta \in [1,2)$, that includes in particular the popular Kullback-Leibler divergence \cite{cic10}.
\end{remark}

\subsection{Resulting screening algorithm}

The proposed safe screening approach for Problem~\eqref{eq:LR} is presented in 
\Cref{alg:solver_screening}. It can be deployed with any iterative solver for~\eqref{eq:LR}, as indicated by the generic notation 
$$\{\x,\vt{\eta}\} \leftarrow \mathtt{PrimalUpdate}(F(\A \cdot + \vt{z};\y); \x,\vt{\eta}).$$
This has to be understood as performing few iterations of a given primal solver on $F(\A \cdot + \vt{z};\y)$  from the initial point~$\x$.
The vector $\vt{\eta}$ contains the hyperparameters of the solver  (e.g.,   step sizes)  and we will make the role of  $\vt{z}\in \R^m$ explicit hereafter.

In \Cref{alg:solver_screening}, {$\Theta : \R^n \rightarrow \s{F}_D$ denotes a function that computes a dual feasible point from the current primal point~$\x$. Details on how to define this function will be provided in \Cref{sec:dual_update}. The quantity}  $\s{A} \subseteq [n]$ refers the \emph{preserved set}, which is the complement of the set of screened coordinates. Starting from $\s{A} = [n]$, it is dynamically reduced (line~\ref{alg:line:preserved_set}) by removing the components that are surely identified as being saturated in the solution vector $\x^*$ (lines~\ref{alg:line:L_sat_var} and~\ref{alg:line:U_sat_var}), according to the safe rule~\eqref{eq:safe_rule}. At the same time, these identified saturated components in $\x$ are permanently set to their optimal value (line~\ref{alg:line:fix_saturated_var}), and their contribution to the vector of measurements is stored in $\vt{z} \in \R^m$ (line~\ref{alg:line:store_z}). 

It follows that the computation of
\begin{equation}\label{eq:decmposition_fwd}
    \A \x = \A_\s{A} \x_{\s{A}} + \A_{\s{A}^c} \x_{\s{A}^c} = \A_\s{A} \x_{\s{A}} + \vt{z}
\end{equation}
 can be reduced from $O(mn)$ to $O(m(|\s{A}|+1))$ in the next calls to the primal solver. As such, the more saturated components are identified, the larger is the speed improvement in the next calls to the primal solver.

\begin{remark}
The introduction of the variable $\vt{z}$ in \Cref{alg:solver_screening} is convenient to present a generic algorithm that encompasses the complete class of functions $f$ considered in this work. Yet, this additional variable can be discarded for some loss functions. For example, when $F(\A\x; \y) = \|\A\x - \y\|^2_2$,  line~\ref{alg:line:store_z} can be replaced by $\y \gets \y - \A_{\s{S}_l \cup \s{S}_u} \x_{\s{S}_l \cup \s{S}_u}$, thus avoiding the use of $\vt{z}$ as well as the addition operation in~\eqref{eq:decmposition_fwd}.
\end{remark}

It is worth mentioning that the acceleration of the primal iterates provided by the screening procedure has to be balanced with the computational overhead of the screening step itself. This mainly concerns the cost of computing the inner products $\cA_j^T \vtheta$ at lines~\ref{alg:line:L_sat_var} and~\ref{alg:line:U_sat_var} which, all together, have a complexity of $O(|\s{A}|m)$. Fortunately, {provided a suitable choice of the dual update function $\Theta(\x)$ (see~\Cref{sec:dual_update}),  most standard primal solvers already require the computation of these inner products. They can thus be reused for free in the screening step}.


\begin{algorithm}[tb]
\caption{Generic screening procedure for Problem~\eqref{eq:LR}} \label{alg:solver_screening}
\begin{algorithmic}[1]
\STATE \textbf{Initialize} $\mathcal{A}= [n]$, $\x \in [\vt{l}, \vt{u}]$, $ \varepsilon_{\gap}>0$ 
\STATE \textbf{Set}  $\vt{\eta}$ according to the solver, 
\STATE \textbf{Compute} $\alpha >0$, a strong concavity bound of $D$
\STATE \textbf{Initialize}  $\vt{z}= \mathbf{0}$ 
	\REPEAT
		\STATE \textit{--- Solver update restricted to the preserved set  ---}
		\STATE $\{\x_{\s{A}},\vt{\eta}\} \mkern-4mu \gets \mkern-4mu  \mathtt{PrimalUpdate}(F(\A_{\s{A}} \cdot + \vt{z}; \y) ; \x_{\s{A}},\vt{\eta})$
		\STATE \textit{--- Dynamic safe screening ---}
		\STATE $\vtheta \gets \Theta(\x) \in \s{F}_D$ \COMMENT{Dual update}\label{alg:line:dual_up}
		\STATE $r \gets \sqrt{2\Gap(\x,\vtheta)/\alpha}$ \COMMENT{Safe radius}
		\STATE $\s{S}_l  \mkern-3mu  \gets \mkern-3mu \{j \!\in\! \s{A}  ~|~ \cA_j^\T\vtheta < - r \|\cA_j\|_2 \}$ \COMMENT{Lower-saturated set}\label{alg:line:L_sat_var}
		\STATE $\s{S}_u \mkern-5mu  \gets \mkern-4mu \{j \!\in\! \s{A} \backslash \s{J}_\infty^u  | \cA_j^\T\vtheta \mkern-3mu > \mkern-3mu r \|\cA_j\|_2 \}$ \COMMENT{Upper-saturated set}\label{alg:line:U_sat_var}
		%
		\STATE $\x_{\s{S}_l} \gets \vt{l}_{\s{S}_l}$, $\x_{\s{S}_u} \gets \vt{u}_{\s{S}_u}$ \COMMENT{Set of saturated entries}\label{alg:line:fix_saturated_var}
        %
         \STATE $\vt{z} \gets \vt{z} + \A_{\s{S}_l \cup \s{S}_u} \x_{\s{S}_l \cup \s{S}_u}$ \COMMENT{Update saturated part}\label{alg:line:store_z}
		 \STATE $\s{A} \gets \s{A} \backslash (\s{S}_l \cup \s{S}_u)$ \COMMENT{Update preserved set} \label{alg:line:preserved_set}
	\UNTIL{$\Gap(\x,\vtheta) < \varepsilon_{\gap}$} 
\end{algorithmic}
\end{algorithm}

\begin{algorithm}[tb]
\caption{Screening procedure for NNLR} \label{alg:solver_screening_NNLR}
\begin{algorithmic}[1]
\STATE \textbf{Initialize} $\mathcal{A}= [n]$, $\x \geq 0$, $ \varepsilon_{\gap}>0$
\STATE \textbf{Set}  $\vt{\eta}$ according to the solver, 
\STATE \textbf{Compute} $\alpha >0$, a strong concavity bound of $D$
	\REPEAT
		\STATE \textit{--- Solver update restricted to the preserved set  ---}
		\STATE $\{\x_{\s{A}},\vt{\eta}\} \mkern-4mu \gets \mkern-4mu  \mathtt{PrimalUpdate}(F(\A_{\s{A}} \cdot; \y) ; \x_{\s{A}},\vt{\eta})$
		\STATE \textit{--- Dynamic safe screening ---}
		\STATE $\vtheta \gets \Theta(\x) \in \s{F}_D$ \COMMENT{Dual update}
		\STATE $r \gets \sqrt{2\Gap(\x,\vtheta)/\alpha}$ \COMMENT{Safe radius}
		\STATE $\s{S}_l  \mkern-3mu  \gets \mkern-3mu \{j \!\in\! \s{A}  ~|~ \cA_j^\T\vtheta < - r \|\cA_j\|_2 \}$ \COMMENT{Lower-saturated set}
		\STATE $\x_{\s{S}_l} \gets \mathbf{0}$ \COMMENT{Set of saturated entries}
		 \STATE $\s{A} \gets \s{A} \backslash \s{S}_l $ \COMMENT{Update preserved set}
	\UNTIL{$\Gap(\x,\vtheta) < \varepsilon_{\gap}$} 
\end{algorithmic}
\end{algorithm}

Finally, let us emphasize that \Cref{alg:solver_screening} can be simplified in the NNLR case as shown in~\Cref{alg:solver_screening_NNLR}. Indeed, as $\s{J}_\infty^u = [n]$, we always have $\s{S}_u=\emptyset$. Moreover, because $\vt{l} = \mathbf{0}$, the vector $\vt{z}$ remains always zero. 

\section{Computing a dual feasible  point}\label{sec:dual_update}

A crucial step of the proposed screening procedure lies in the computation of a dual feasible point (function ${\Theta} : \R^n \rightarrow \s{F}_D$ at line~\ref{alg:line:dual_up} of \Cref{alg:solver_screening}). One can see from the definition of the Gap safe sphere in~\eqref{eq:GAP_Safe_sph} that the closer $ {\Theta}(\x)$ is to $\vtheta^\star$, the smaller the safe region is likely to be. Moreover, the computation of ${\Theta}(\x)$ should be cheap in order to minimize the computational load of the screening step. In this section, we present ways of defining ${\Theta}$ so as to meet these two desirable properties.
Without loss of generality, we focus only on the BVLR and NNLR cases, the case with mixed constraints being easily deduced from the latter two. 

\subsection{Dual update for BVLR}

As pointed out in \Cref{sec:dual_prob}, the dual problem of BVLR is unconstrained, i.e., $\s{F}_D = \R^m$. As such, any point $\vtheta \in \R^m$ is admissible to be a center for the Gap safe sphere. Following the \textit{dual scaling} idea~\cite{Ndiaye2017}, we propose to define $\Theta$~as
\begin{align} \label{eq:BVLR_dual_up}
\Theta(\x) := -\nabla F(\A\x ; \y).
\end{align}
The notation $\nabla F(\A\x ; \y)$ refers to the gradient of the function $F(\cdot;\y)$ evaluated in $\A\x$. Note that here no scaling of this gradient is required as  $\s{F}_D = \R^m$.
The rationale behind this choice is twofold. First, we get from the primal-dual link~\eqref{eq:optimality1} that  $\Theta(\x) \rightarrow \vtheta^\star$ as $\x \rightarrow \x^\star$. Second, for any  first-order primal solver,  
\begin{equation}
    \label{eq:grad}
    \nabla P(\x) = \A^\T \nabla F(\A\x; \y) = - \A^\T \Theta(\x)
\end{equation}
is computed during the primal update step. This vector contains nothing else than the inner products needed for the screening test (lines~\ref{alg:line:L_sat_var} and~\ref{alg:line:U_sat_var} of~\Cref{alg:solver_screening}) and can thus be reused for free.

\subsection{Dual update for NNLR}

Computing a dual feasible point for the NNLR problem is more involved than for BVLR. Here, $\s{F}_D = \{ \vtheta \in \R^m : \A^\T \vtheta \leq 0\}$ and dual scaling is no longer possible since, 
\begin{equation}
    \vt{z} \notin \s{F}_D \, \Longrightarrow \, (\rho \vt{z}) \notin \s{F}_D \quad  \forall \rho >0.
\end{equation}
In other words, if $-\nabla F(\A\x ; \y)$ is not a feasible dual point (i.e., $\exists j$ such that $- \cA_j^\T \nabla F(\A\x ; \y) >0$), then any scaled version of it remains not feasible.

Instead, assuming that $\mathrm{Int}(\s{F}_D) \neq \emptyset$ (i.e., that  the interior of $\s{F}_D$ is  nonempty, see \Cref{rem:interior_FD}), we propose a \textit{dual translation} strategy (by analogy with dual scaling)  defined, for any vector $\vt{t} \in \mathrm{Int}(\s{F}_D)$, as 
\begin{equation}\label{eq:BVLR_dual_up2}
     \Theta(\x) := \Xi_{\vt{t}}(-\nabla F(\A\x;\y))   
\end{equation}
where $\Xi_{\vt{t}}$ is the translation in the direction of vector $\vt{t}$
\begin{equation}
     \Xi_{\vt{t}}(\vt{z}) := \vt{z} + \left( \max_{j \in [n]}  \frac{(\cA_j^\T\vt{z})^+}{|\cA_{j}^\T\vt{t}|} \right)\vt{t}.
\end{equation}
Proposition~\ref{prop:NNLR_dual_trans} shows that this $\Theta$ indeed maps $\x$ onto $\s{F}_D$ and leads to  the desired convergence property. Moreover, similarly to~\eqref{eq:grad}, this dual translation allows one to reuse in the screening test some quantities computed during the primal update.  Note that the additional inner products $\A^\T\vt{t}$ can be pre-computed, which keeps $\A^\T \Theta(\x)$ as cheap as with the standard dual scaling procedure. 

\begin{proposition}[Validity of the dual translation] \label{prop:NNLR_dual_trans}
Let $\Theta$ be defined as in~\eqref{eq:BVLR_dual_up2}. Then, for any primal point  $\x\in \R_{\geq 0}^n$, we have $\Theta(\x) \in \s{F}_D$. Moreover, $\Theta(\x) \to \vtheta^\star$ as $\x \to \x^\star$.
\end{proposition}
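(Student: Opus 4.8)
The statement has two parts: (i) feasibility, $\Theta(\x)\in\s{F}_D$ for any $\x\in\R_{\geq0}^n$; and (ii) convergence, $\Theta(\x)\to\vtheta^\star$ as $\x\to\x^\star$.

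For part (i), the plan is a direct computation. Write $\vt{z}=-\nabla F(\A\x;\y)$ and let $c=\max_{j\in[n]}(\cA_j^\T\vt{z})^+/|\cA_j^\T\vt{t}|\geq 0$, so that $\Theta(\x)=\vt{z}+c\,\vt{t}$. I need to check $\cA_j^\T(\vt{z}+c\,\vt{t})\leq 0$ for every $j\in[n]$. Since $\vt{t}\in\mathrm{Int}(\s{F}_D)$ we have $\cA_j^\T\vt{t}<0$, hence $|\cA_j^\T\vt{t}|=-\cA_j^\T\vt{t}$, and by definition of $c$, $c\geq (\cA_j^\T\vt{z})^+/|\cA_j^\T\vt{t}|$ for each $j$. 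Multiplying through by $|\cA_j^\T\vt{t}|>0$ gives $c\,|\cA_j^\T\vt{t}|\geq(\cA_j^\T\vt{z})^+\geq\cA_j^\T\vt{z}$, i.e.\ $-c\,\cA_j^\T\vt{t}\geq\cA_j^\T\vt{z}$, which rearranges to $\cA_j^\T(\vt{z}+c\,\vt{t})\leq 0$. This holds for all $j$, so $\Theta(\x)\in\s{F}_D$.

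For part (ii), the key is to show $c\to 0$ as $\x\to\x^\star$, after which $\Theta(\x)=\vt{z}+c\,\vt{t}\to -\nabla F(\A\x^\star;\y)=\vtheta^\star$ by continuity of $\nabla F$ (Lipschitz gradient) combined with the primal-dual link~\eqref{eq:optimality1}. To see $c\to 0$: by continuity, $\cA_j^\T\vt{z}=-\cA_j^\T\nabla F(\A\x;\y)\to -\cA_j^\T\nabla F(\A\x^\star;\y)=\cA_j^\T\vtheta^\star$ for each $j$. By the optimality condition~\eqref{eq:optimality2} specialized to NNLR (where, as noted in the excerpt, the $u_j$-case never occurs), we have $\cA_j^\T\vtheta^\star\leq 0$ for all $j$, hence $(\cA_j^\T\vtheta^\star)^+=0$. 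Since $\vt{t}$ is fixed with $|\cA_j^\T\vt{t}|>0$, each ratio $(\cA_j^\T\vt{z})^+/|\cA_j^\T\vt{t}|\to 0$, and the maximum over the finite index set $[n]$ tends to $0$ as well; thus $c\to 0$.

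The main obstacle, such as it is, is being careful that the quantity inside the max is continuous in $\x$ — which it is, since $(\cdot)^+$ is continuous, $\nabla F$ is continuous, and the max of finitely many continuous functions is continuous — and invoking the correct form of the optimality conditions (the NNLR specialization of~\eqref{eq:optimality2}) to conclude $\cA_j^\T\vtheta^\star\leq 0$. One should also note that $\mathrm{Int}(\s{F}_D)\neq\emptyset$ is assumed, which is what guarantees the existence of $\vt{t}$ with $\cA_j^\T\vt{t}<0$ strictly for all $j$, so that no division by zero occurs and the $|\cA_j^\T\vt{t}|$ in the denominator are bounded away from $0$ uniformly over the finite set $[n]$. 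Everything else is routine.
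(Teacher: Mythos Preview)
Your proof is correct and follows essentially the same approach as the paper: a direct computation for feasibility using $\cA_j^\T\vt{t}<0$ and the definition of the max, and a continuity argument for convergence using the primal--dual link~\eqref{eq:optimality1} together with $(\cA_j^\T\vtheta^\star)^+=0$ (which is the paper's $\Xi_{\vt{t}}(\vtheta^\star)=\vtheta^\star$). Your write-up is in fact slightly more explicit than the paper's (you spell out why no division by zero occurs and why the max of finitely many continuous functions is continuous), but the argument is the same.
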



\begin{proof}
It is sufficient to show that $\forall \vt{z} \in \R^m$, $\Xi_{\vt{t}}(\vt{z}) \in \s{F}_D$, i.e., $\A^\T \Xi_{\vt{t}}(\vt{z}) \leq 0$.
Denoting  $\epsilon = \max_{k \in [n]}  \frac{(\cA_k^\T\vt{z})^+}{|\cA_{k}^\T\vt{t}|}$, we get,  by definition of $\Xi$ that, $\forall j \in [n]$
\begin{align*}
    \cA_j^\T\Xi_{\vt{t}}(\vt{z}) 
    & = \cA_j^\T(\vt{z} + \epsilon \vt{t})
    = \cA_j^\T\vt{z} + \epsilon \cA_j^\T\vt{t} \\
    & = \cA_j^\T\vt{z} +  \left( \max_{k \in [n]} \frac{(\cA_{k}^\T\vt{z})^+}{|\cA_{k}^\T\vt{t}|} \right) \cA_j^\T\vt{t} \\
    &\leq \cA_j^\T\vt{z} +    \frac{\cA_{j}^\T\vt{z}}{|\cA_{j}^\T\vt{t}|}  \cA_j^\T\vt{t} = 0
\end{align*}
where we used the fact that $\vt{t} \in \mathrm{Int}(\s{F}_D)$, i.e., $\cA_j^\T \vt{t} <0$ $\forall j \in [n]$. Finally, by continuity of $ \nabla F(\A \cdot; \y)$ we get that $$-\nabla F(\A \x; \y) \to  - \nabla F(\A \x^\star; \y) \underset{\eqref{eq:optimality1}}{=} \vtheta^{\star} \text{ as } \x \to \x^\star.$$
Then, the continuity of $\Xi_{\vt{t}}$ together with $\Xi_{\vt{t}}(\vtheta^\star) = \vtheta^\star$ (as $\vtheta^\star \in \s{F}_D$) proves that  $\Theta(\x) \to \vtheta^\star$ as $\x \to \x^\star$.
\end{proof}

\begin{remark}[Comment on  $\mathrm{Int}(\s{F}_D) \neq \emptyset$]\label{rem:interior_FD} One may wonder to what extend such a condition is restrictive in practice. From the expression of $\s{F}_D$ in~\eqref{eq:Dual_set}, we get that $\mathrm{Int}(\s{F}_D) \neq \emptyset$ is equivalent to
\begin{equation}\label{eq:interior_FD}
    \exists \vt{\omega} \in \R^m \; s.t. \; \A^\T \vt{\omega} < 0.
\end{equation}
In other words, the columns of $\A$ must belong to the interior of a half space of $\R^m$ containing the origin on its boundary. Hence, if $\mathrm{Int}(\s{F}_D) = \emptyset$  (i.e.,~\eqref{eq:interior_FD} fails), we have $\vt{0} \in \mathrm{conv}\{\cA_j\}_{j=1}^n$ and the NNLS problem is ill-posed as it admits infinitely many solutions. If furthermore  $\vt{0} \in \mathrm{Int}\left(\mathrm{conv}\{\cA_j\}_{j=1}^n\right)$, we have $\mathrm{cone}\{\cA_j\}_{j=1}^n = \R^m$ meaning that the non-negativity constraint is useless \cite{slawski2013non}.
Note that the latter case corresponds to $\s{F}_D = \{\vt{0}\}$.
To conclude, relevant NNLS problems satisfy  $\mathrm{Int}(\s{F}_D) \neq \emptyset$.
\end{remark}

It remains to discuss how one can determine a direction $\vt{t} \in \mathrm{Int}(\s{F}_D)$. Although there is no systematic approach for a general $\A \in \R^{m \times n}$, this can be achieved on a case-by-case basis for many relevant matrices $\A$.

\begin{proposition}\label{prop:Admissible_A}
The following types of matrix $\A$ ensure that $\mathrm{Int}(\s{F}_D) \neq \emptyset$. Moreover, a vector $\vt{t} \in \s{F}_D$ can be easily computed. \begin{enumerate}
 \setlength{\itemsep}{0pt}
    \item $\A \in \R^{m \times n}$ with $\rank (\A) = n \leq  m$. Then, for any $\vt{b} \in \R_{<0}^n$, all the solutions of $\A^\T \vt{t} = \vt{b}$ (there exists at least one) satisfy $\vt{t} \in \mathrm{Int}(\s{F}_D)$.
    \item $\A \in \R^{m \times m}$ orthogonal (i.e., $\A^\T \A = \Id$). Then, any negative linear combination of the columns of $\A$, i.e., $\vt{t} = \sum_{j \in [n]} \beta_j {\vt{a}}_j$ for $\vt{\beta} \in \R_{< 0}^n$, satisfies $\vt{t} \in \mathrm{Int}(\s{F}_D)$.
    \item $\A \in \R_{\geq 0}^{m \times n}$ with non-negative entries and no column with only zeros. Then, any negative vector $\vt{t} \in \R_{<0}^m$ satisfies $\vt{t} \in \mathrm{Int}(\s{F}_D)$.
    \item $\A \in \R^{m \times n}$ such that $\A^\T \A$ contains a column (say the $j$th) with all entries positive. Then, $\vt{t}=- \cA_j$ satisfies $\vt{t} \in \mathrm{Int}(\s{F}_D)$.
\end{enumerate}
\end{proposition}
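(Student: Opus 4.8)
The plan is to verify each of the four cases separately, in each instance exhibiting the claimed vector $\vt{t}$ and checking that it satisfies the characterization of $\mathrm{Int}(\s{F}_D)$ established in \Cref{rem:interior_FD}, namely that $\A^\T\vt{t}<0$ (i.e., $\cA_j^\T\vt{t}<0$ for every $j\in[n]$). So the entire proof reduces to four short computations plus, in cases 1 and 2, an argument that the proposed $\vt{t}$ exists at all.

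For case 1, since $\rank(\A)=n\le m$, the map $\vt{t}\mapsto \A^\T\vt{t}$ from $\R^m$ to $\R^n$ is surjective, so for any target $\vt{b}\in\R_{<0}^n$ there is at least one solution of $\A^\T\vt{t}=\vt{b}$; any such solution has $\cA_j^\T\vt{t}=b_j<0$ for all $j$, hence lies in $\mathrm{Int}(\s{F}_D)$. For case 2, write $\vt{t}=\sum_{k}\beta_k\cA_k$ with all $\beta_k<0$; then $\cA_j^\T\vt{t}=\sum_k\beta_k\cA_j^\T\cA_k=\beta_j$ by orthonormality, which is negative, so $\A^\T\vt{t}=\vt{\beta}<0$. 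For case 3, with $\A$ entrywise nonnegative and no zero column, take any $\vt{t}\in\R_{<0}^m$; then $\cA_j^\T\vt{t}=\sum_i [\A]_{ij} t_i$ is a sum of nonpositive terms, at least one of which is strictly negative (because column $j$ has a positive entry and the corresponding $t_i<0$), hence $\cA_j^\T\vt{t}<0$. For case 4, if the $j$th column of $\A^\T\A$ is entrywise positive, set $\vt{t}=-\cA_j$; then for every $k$, $\cA_k^\T\vt{t}=-\cA_k^\T\cA_j=-[\A^\T\A]_{kj}<0$, so $\A^\T\vt{t}<0$.

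In each case one concludes by invoking the equivalence from \Cref{rem:interior_FD}: the existence of a vector $\vt{\omega}$ (here $\vt{t}$) with $\A^\T\vt{\omega}<0$ is precisely equivalent to $\mathrm{Int}(\s{F}_D)\neq\emptyset$, and moreover such a $\vt{t}$ itself lies in $\mathrm{Int}(\s{F}_D)$ since $\s{F}_D=\{\vtheta:\A^\T\vtheta\le 0\}$ for NNLR. I would state this once at the start and then treat the four items as a bulleted (or enumerated) verification.

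I do not anticipate a genuine obstacle here — each item is a one-line linear-algebra check. The only points requiring mild care are: in case 1, making explicit that surjectivity of $\A^\T$ (equivalently, $\A$ having full column rank) is what guarantees a solution exists; in case 2, being careful that ``orthogonal'' is used with $n\le m$ and $\A^\T\A=\Id$ so that $\cA_j^\T\cA_k=\delta_{jk}$ (the statement's phrasing with $m\times m$ is the clean case); and in case 3, noting explicitly that the ``no all-zero column'' hypothesis is exactly what forces $\cA_j^\T\vt{t}$ to be \emph{strictly} negative rather than merely $\le 0$. None of these is hard; the proof is essentially a matter of organizing the four verifications cleanly.
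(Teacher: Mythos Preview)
Your proposal is correct and mirrors the paper's own proof almost exactly: each of the four cases is handled by the same one-line verification that $\A^\T\vt{t}<0$, with the same surjectivity argument in case~1 and the same orthogonality/nonnegativity observations in cases~2--4. The only cosmetic difference is that the paper writes case~2 as $\cA_i^\T\vt{t}=\beta_i\|\cA_i\|_2^2$ before noting it is negative, whereas you use $\A^\T\A=\Id$ directly to get $\cA_j^\T\vt{t}=\beta_j$; these are equivalent.
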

\begin{proof}
We prove each case independently.

1. The existence of solutions for $\A^\T \vt{t} = \vt{b}$ comes from the fact that the columns of $\A^\T$ span $\R^n$ (as $\rank (\A) = n$). Then, as  $\vt{b} \in \R_{<0}^n$, we have $\A^\T \vt{t} = \vt{b} < 0$ and thus $\vt{t}  \in \mathrm{Int}(\s{F}_D)$.

2. Due to the orthogonality of $\A$, we have that $\forall i \neq j$, ${\vt{a}}_i^\T {\vt{a}}_j =0$. Hence, for any $\vt{\beta} \in \R_{< 0}^n$, we have
    $$
        \forall i \in [n], \quad  {\vt{a}}_i^\T \left(  \sum_{j \in [n]} \beta_j {\vt{a}}_j \right)  = \beta_i \|{\vt{a}}_i \|_2^2 < 0
    $$
    which shows that $\vt{t} =  \sum_{j \in [n]} \beta_j {\vt{a}}_j  \in \mathrm{Int}(\s{F}_D)$.

3.  Given that $ \forall i \in [n]$, ${\vt{a}}_i \in \R^m_{\geq 0} \backslash \{\mathbf{0} \}$, we have, for any  negative vector $\vt{t} \in \R_{<0}^m$, that
    $
        \forall i \in [n], \;  {\vt{a}}_i^\T \vt{t}   < 0
    $
     which shows that $\vt{t}  \in \mathrm{Int}(\s{F}_D)$.

4.  Direct consequence of the assumption on $\A$.
\end{proof}


\section{Experiments} \label{sec:experiments}

In this paper, we restrict ourselves to the popular bounded-variable ($\vt{l} < \vt{u} < \infty$) and non-negative ($\vt{l} = \vt{0} < \vt{u} = \infty$) least squares problems, i.e., $f(z;y) = \frac12 (z-y)^2$. In this case, the conjugate function is given by $f^*(\theta ; y) = \frac12 ((y+\theta)^2 - y^2)$. 
In the NNLS experiments, we assumed $\A \in \R_{\geq 0}^{m \times n}$, which corresponds to traditional scenarios. As such, unless otherwise stated, the dual translation vector is set to $\vt{t} = - \vt{1}$ (according to~\Cref{prop:Admissible_A}).

We use projected gradient descent~\cite{polyak2015projected} and the Chambolle-Pock primal-dual algorithm \cite{Chambolle2011} to solve the BVLS problem. For NNLS, we consider both the coordinate descent (CD) method of~\cite{franc2005sequential} and the \texttt{lsqnonneg} routine of MATLAB (a variant of the original active set algorithm of~\cite{lawson1974solving}) denoted simply \emph{Active Set} hereafter. The algorithms are stopped when the duality gap falls below $10^{-6}$. For all baselines without screening, the duality gap has been computed offline so as not to impact the measured execution times.

\Cref{sec:Simu} reports results that illustrate how screening performance varies with specific experimental parameters. Then,~\Cref{sec:Real} is devoted to the evaluation of the proposed screening with real datasets.   

\subsection{Understanding screening behaviour}\label{sec:Simu}

\subsubsection{Influence of the saturation ratio.}

\begin{figure}[t]
\begin{center}
\centerline{\includegraphics[width=0.8\columnwidth]{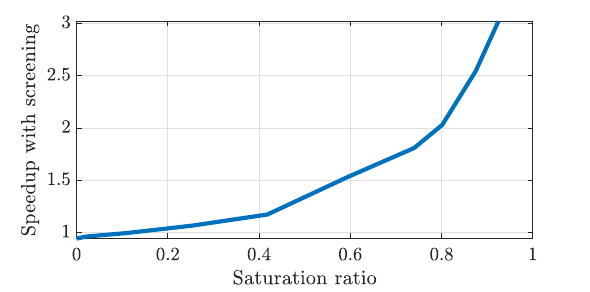}}
\caption{Speedup w.r.t. saturation ratio for a BVLS problem of size $m=4000$, $n=2000$, solved with projected gradient descent. The matrix $\A \in \R^{m \times n}$ and vector $\y \in \R^m$ have been generated according to $a_{ij} \sim \mathcal{N}(0,1)$ and $y_i \sim \mathcal{N}(0,1)$. The saturation ratio is controlled by varying the size of the box $b[-\vt{1},\vt{1}]$.} 
\label{fig:BVLR_Box-size}
\end{center}
\end{figure}

\Cref{fig:BVLR_Box-size} presents the typical evolution of the speedup factor achieved with screening as a function of the saturation ratio (i.e., $s/n$ where $s \in \N$ is the number of saturated components in $\x^\star$). As expected, the higher the number of saturated components, the best the screening performance. Yet, there is a critical value of the saturation ratio under which the computational overhead of screening dominates the acceleration of the primal update, leading to an overall ``speedup'' below 1.

\subsubsection{Influence of the problem parameters.} %

We report in~\Cref{tab:NNLS,tab:BVLS} execution times for NNLS and BVLS problems respectively with increasing size and fixed saturation ratio. For both active set and coordinate descent algorithms, we observe that the screening performance (i.e., the speed improvement) increases with the size  and the level of indeterminacy of the problem. Speedups are obtained consistently and independently of the problem instance (BV or NN) and the chosen solver. Most tested solvers benefit from speedups of around five times, except for the Active Set solver which, by its own nature, is less prone to screening approaches (as they already manipulate reduced sets of coordinates).

\begin{table}[t]
\caption{Execution times and speedup for a NNLS problem with fixed number of rows $m=2000$ and different number of columns~$n$. For each instance of the problem, the matrix $\A \in \R_{\geq 0}^{m \times n}$ has been generated using $a_{ij} = |\eta|$ with $\eta \sim \mathcal{N}(0,1)$. The data vector $\y \in \R^m_{\geq 0}$ has been obtained as $\y = \A \bar{\x} + \vt{\epsilon}$ where $\bar \x \in \R^n_{\geq 0}$ is such that $\|\bar \x\|_0 / n =0.05$ with non-zero entries {$\bar{x}_j$ distributed like $a_{ij}$} and $\vt{\epsilon} \in \R^m$ is such that $\epsilon_i \sim \mathcal{N}(0,1)$.} 
\label{tab:NNLS}
\begin{center} \begin{sc} \begin{tabular}{lcccc}
%
\toprule
                 & $n$        & Baseline  {[}s{]}       & Screening  {[}s{]}    & Speedup    \\ \midrule
\multirow{3}{*}{\rotatebox[origin=c]{90}{\parbox[c]{1.4cm}{\centering Coord. Descent}}} 
     & 1000   & 2.19       &  0.71        & 3.08 \\
     & 2000  &  10.2      &   2.09     &    4.87  \\
     & 4000  &  64.28      &    9.52    &  6.75  \\
     & 6000  &  146.12     &   18.63     &  7.84 \\     
\midrule
\multirow{3}{*}{\rotatebox[origin=c]{90}{\parbox[c]{1.4cm}{\centering Active Set}}} 
     & 1000   & 0.11      &   0.09    &  1.25    \\
     & 2000  &  0.16      &    0.13    &   1.23 \\
     & 4000  &  0.33      &     0.25   &    1.31 \\
     & 6000  &  0.36      &     0.26  &   1.38 \\     
\bottomrule
\end{tabular} \end{sc} \end{center}
\vskip -0.1in
\end{table}

\begin{table}[h!]
\caption{Execution times and speedup for a BVLS problem with $m=1000$ and same setup as in \Cref{tab:NNLS}, except that $\bar{x}_j \sim \mathcal{U}(0,1)$ with bounds $\vt{l}=\vt{0}$ and $\vt{u}=\vt{1}$.}
\label{tab:BVLS}
\begin{center} \begin{sc} \begin{tabular}{lcccc}
%
\toprule
                 & $n$        & Baseline  {[}s{]}       & Screening  {[}s{]}    & Speedup  
                 \\ \midrule
\multirow{3}{*}{\rotatebox[origin=c]{90}{\parbox[c]{1.4cm}{\centering Proj. Grad.}}} 
     & 500  &   9.41  &  1.71  &  5.49  \\ 
     & 1000 &  27.98  &  4.33  &   6.47 \\  
     & 2000 &  127.21 & 18.82  &   6.76 \\  
     & 3000 &  347.05 & 48.46  &   7.16 \\  
\midrule
\multirow{3}{*}{\rotatebox[origin=c]{90}{\parbox[c]{1.4cm}{\centering Primal dual}}} 
     & 500  &  0.26  &  0.08  &   3.41 \\  
     & 1000 &  0.84  &  0.19  &   4.52 \\  
     & 2000 &  2.92  &  0.59  &   4.97 \\  
     & 3000 &  5.20  &  0.95  &   5.48 \\  
\bottomrule
\end{tabular} \end{sc} \end{center}
\vskip -0.1in
\end{table}

\subsubsection{Influence of choice of the dual point.}

In \Cref{fig:NNLS_choice_t}, we report for a NNLS problem the screening ratio (i.e., the number of identified saturated components relatively to the size $n$) as a function of the iteration number, for different choices of the dual translation vector $\vt{t} \in \mathrm{Int}(\mathcal{F}_D)$. Clearly, the choice of $\vt{t}$ affects the screening performance. Although the existence of an optimal choice remains an open question, the reported results allow for some intuition. Indeed, denoting $\vt{a}_{+}$ (resp., $\vt{a}_{-}$) the column of $\A \in \R^{m \times n}_{\geq 0}$ that correlates the most (resp., the least) with all other columns, we can see that setting $\vt{t} = - \vt{a}_{+}$ leads to significantly better screening performance than   $\vt{t} = - \vt{a}_{-}$. Other relevant choices for this example are $\vt{t}= - \vt{1}$ (used in all remaining NNLS examples) and $\vt{t}= - \frac{1}{n} \sum_j \cA_j$. This suggests the conjecture that a relevant $\vt{t}$ should be close to the ``central axis'' of $\mathrm{cone} \{\cA_j\}_{j=1}^n$. 

\begin{figure}
\begin{center}
\centerline{\includegraphics[width=0.8\columnwidth, trim = {0 0 0 0}]{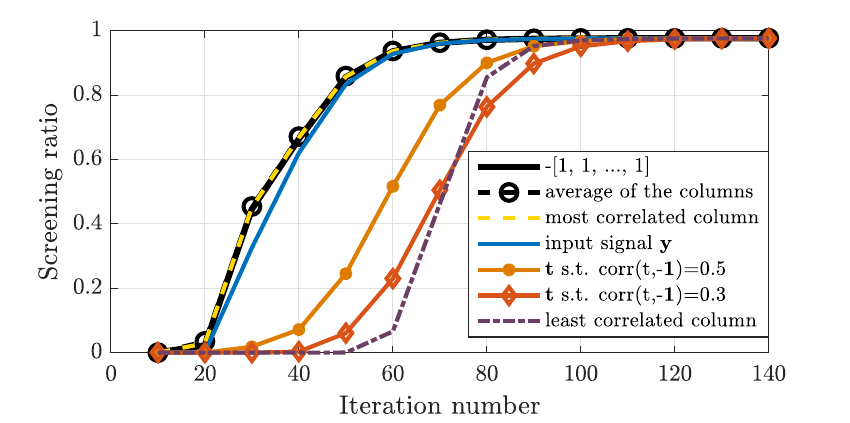}}
\caption{Screening performance with different choices for the translation vector $\vt{t}$ on a NNLS problem with the  NIPS papers dataset (described in \Cref{ssec:NIPSpapers}). } 
\label{fig:NNLS_choice_t}
\end{center}
\vskip -0.2in
\end{figure}

\subsubsection{Limits of screening: oracle dual point.}
To evaluate the practical limits of the proposed screening approach, we perform experiments in which a perfect dual update is performed artificially. The results in \Cref{fig:oracle_dual} show that, although significant acceleration is already obtained with the usual dual update, there is still room for improvement. 
The gap to the optimal bound could be reduced in practice with better dual point estimations. In the NNLS case specifically, this could be achieved by defining better dual directions or maybe even with a completely different approach than the proposed dual translation (see, e.g. \cite{Massias2020}).


\begin{figure}[t]
    \centering
    \begin{minipage}{0.5\textwidth}
        \centering
        \includegraphics[width=\linewidth,trim={0 1.1cm 0.5cm 0} ]{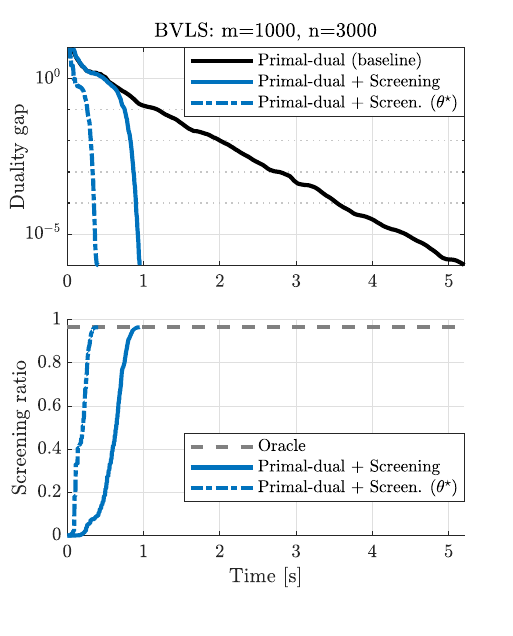}
    \end{minipage}%
    \begin{minipage}{0.5\textwidth}
        \centering
        \includegraphics[width=\linewidth,trim={0 1.1cm 0.5cm 0}]{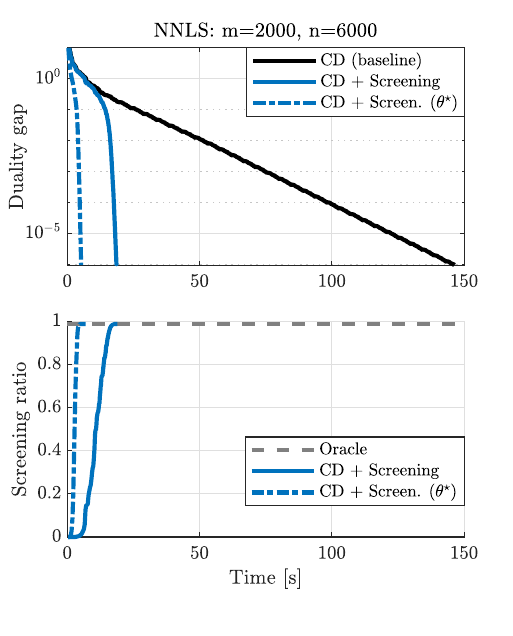}
    \end{minipage}
    \caption{Duality gap convergence (top) and screening ratio (bottom) through time for the simulation setups described in \Cref{tab:NNLS,tab:BVLS}. Left: BVLS problem with primal-dual solver. Right: NNLS problem with coordinate descent solver. The dashed curves correspond to the screening approach artificially informed with an optimal dual point ($\vtheta^\star$), which reaches a speedup of $12.8$ (left) and $27.8$ (right). \label{fig:oracle_dual}}
\end{figure}

\subsection{Performance in applicative scenarios}\label{sec:Real}

\subsubsection{BVLS for hyperspectral unmixing.}

In this experiment, the data vector $\y$ is the observed light reflectance spectrum vector of a random pixel from the Cupitre hyperspectral image~\cite{Jia2007}. The columns of $\A \in \R^{188 \times 342}_{\geq 0}$ are set as the reflectance spectra of pure materials from the USGS High Resolution Spectral Library~\cite{clark2007usgs}, using the same pre-processing as in \cite{Bioucas-Dias2012}. The goal of the regression problem is thus to identify and determine the proportions (so-called abundances) of the materials that compose $\y$. Physical observation constraints dictate that the abundances should lie in the interval $[0, 1]$, leading to a BVLS problem \cite{Bioucas-Dias2012}.

The convergence and screening ratio curves of the projected gradient algorithm with and without screening are presented in \Cref{fig:BVLR_Hyperspectral}. We observe that, as the screening ratio progressively grows, the iterations become faster and the convergence curve eventually detaches from the baseline. 

\subsubsection{NNLS for archetypal analysis.} \label{ssec:NIPSpapers}

The NIPS papers dataset contains word counts from 2484 papers published at the NIPS conference between 1988 to 2003 \cite{NIPSpapers1-17}.
We removed any all-zero columns or rows from the original data matrix and normalized its columns. The input data $\y\in \R_{\geq 0}^{2483}$ is taken as a random sample of the dataset and all remaining samples form the columns of $\A \in \R^{2483\times 14035}_{\geq 0}$. 

The convergence and screening ratio curves of the active set and coordinate descent algorithms are depicted in \Cref{fig:NNLR_NIPSpapers}. While we observe a substantial acceleration for the  coordinate descent, its counterpart for the active set method is more subdued. Yet, in both cases, the proposed screening strategy allows to accelerate the considered solvers.

\section{Conclusion}

In the paper we extended the recently fruitful safe screening framework to the intrinsically distinct family of box-constrained problems. 
Instead of identifying zero coordinates induced by a sparse regularization term, we manage to safely identify saturated coordinates induced by the constraints. 
The main technical challenge in the proposed approach lies in the choice of a dual feasible point which is non-trivial when the box limit is allowed to be unbounded (an example being the widespread NNLS problem). We proposed the simple and efficient \emph{dual translation} procedure to tackle this problem and suggested some practical choices for the translation direction.
Determining an optimal translation direction is actually a challenging problem that deserves further studies---inasmuch as the quality of the dual point can decisively influence the screening performance. 


\begin{figure}[htb!]
\centering
\begin{minipage}{0.5\textwidth}
    \centering
    \includegraphics[width=\linewidth,trim={0 0.6cm 0.5cm 0} ]{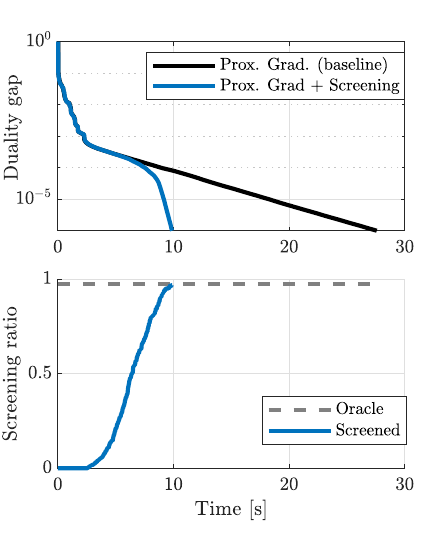}
\end{minipage}%
\begin{minipage}{0.5\textwidth}
    \centering
    \includegraphics[width=\linewidth,trim={0 0.6cm 0.5cm 0}]{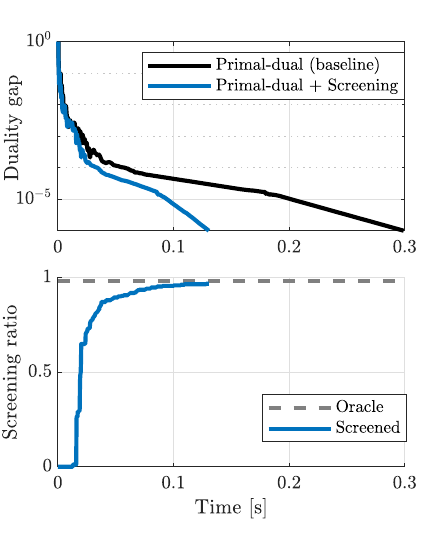}
\end{minipage}
%
%
\caption{Convergence and screening ratio for the BVLS hyperspectral experiment.
Speedups of $2.79$ and  $2.30$ are achieved respectively for the projected gradient (left) and primal-dual (right) solvers. 
\label{fig:BVLR_Hyperspectral}
} 
\begin{minipage}{0.5\textwidth}
    \includegraphics[width=\linewidth,trim={0 0.6cm 0.5cm 0} ]{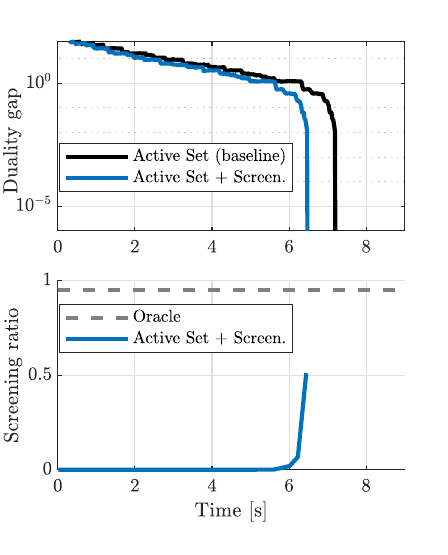}
\end{minipage}%
\begin{minipage}{0.5\textwidth}
    \includegraphics[width=\linewidth,trim={0 0.6cm 0.5cm 0}]{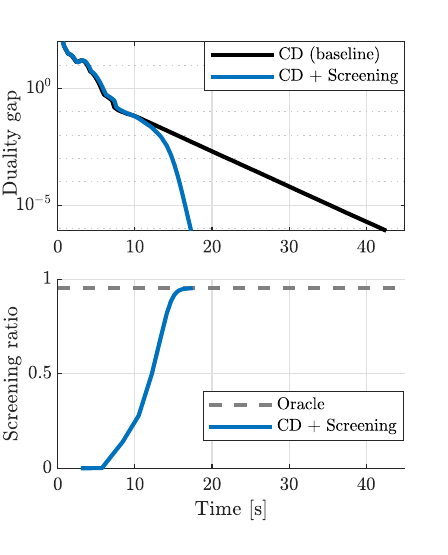}
\end{minipage}
%
%
\caption{ Convergence and screening ratio for the NNLS NIPS papers experiment. Speedups of 2.44 and 1.12 are achieved respectively for the coordinate descent (left) and the active set (right) solvers.} 
\label{fig:NNLR_NIPSpapers}
\end{figure}

\clearpage




\bibliography{PhD}

\begin{thebibliography}{10}
\providecommand{\url}[1]{\texttt{#1}}
\providecommand{\urlprefix}{URL }
\providecommand{\doi}[1]{https://doi.org/#1}

\bibitem{bellavia2006interior}
Bellavia, S., Macconi, M., Morini, B.: An interior point newton-like method for
  non-negative least-squares problems with degenerate solution. Numerical
  Linear Algebra with Applications  \textbf{13}(10),  825--846 (2006)

\bibitem{Bioucas-Dias2012}
Bioucas-Dias, J.M., Plaza, A., Dobigeon, N., Parente, M., Du, Q., Gader, P.,
  Chanussot, J.: Hyperspectral unmixing overview: Geometrical, statistical, and
  sparse regression-based approaches. IEEE Journal of Selected Topics in
  Applied Earth Observations and Remote Sensing  \textbf{5}(2),  354--379
  (2012)

\bibitem{boyer2019representer}
Boyer, C., Chambolle, A., Castro, Y.D., Duval, V., De~Gournay, F., Weiss, P.:
  On representer theorems and convex regularization. SIAM Journal on
  Optimization  \textbf{29}(2),  1260--1281 (2019)

\bibitem{bro1997fast}
Bro, R., De~Jong, S.: A fast non-negativity-constrained least squares
  algorithm. Journal of Chemometrics: A Journal of the Chemometrics Society
  \textbf{11}(5),  393--401 (1997)

\bibitem{Chambolle2011}
Chambolle, A., Pock, T.: A first-order primal-dual algorithm for convex
  problems with applications to imaging. Journal of Mathematical Imaging and
  Vision  \textbf{40}(1),  120–145 (May 2011).
  \doi{10.1007/s10851-010-0251-1}

\bibitem{chen2010nonnegativity}
Chen, D., Plemmons, R.J.: Nonnegativity constraints in numerical analysis. In:
  The birth of numerical analysis, pp. 109--139. World Scientific (2010)

\bibitem{cic10}
Cichocki, A., Amari, S.: Families of {A}lpha- {B}eta- and {G}amma- divergences:
  {F}lexible and robust measures of similarities. Entropy  \textbf{12}(6),
  1532--1568 (June 2010)

\bibitem{clark2007usgs}
Clark, R.N., Swayze, G.A., Wise, R.A., Livo, K.E., Hoefen, T.M., Kokaly, R.F.,
  Sutley, S.J.: {USGS} digital spectral library splib06a. Tech. rep., US
  Geological Survey (2007)

\bibitem{Dantas2021}
Dantas, C., Soubies, E., F{\'e}votte, C.: {Expanding boundaries of Gap Safe
  screening}. Journal of Machine Learning Research (JMLR)  \textbf{22}(236),
  1--57 (2021)

\bibitem{Elvira2020}
Elvira, C., Herzet, C.: Safe squeezing for antisparse coding. IEEE Transactions
  on Signal Processing  \textbf{68},  3252--3265 (2020)

\bibitem{franc2005sequential}
Franc, V., Hlav{\'a}{\v{c}}, V., Navara, M.: Sequential coordinate-wise
  algorithm for the non-negative least squares problem. In: International
  Conference on Computer Analysis of Images and Patterns. pp. 407--414.
  Springer (2005)

\bibitem{NIPSpapers1-17}
Globerson, A., Chechik, G., Pereira, F., Tishby, N.: {Euclidean embedding of
  co-occurrence data}. The Journal of Machine Learning Research  \textbf{8},
  2265--2295 (2007)

\bibitem{Hiriart-Urruty1993a}
Hiriart-Urruty, J.B., Lemar{\'{e}}chal, C.: Convex Analysis and Minimization
  Algorithms {II}. Springer Berlin Heidelberg (1993)

\bibitem{Jia2007}
{Jia}, S., {Qian}, Y.: Spectral and spatial complexity-based hyperspectral
  unmixing. IEEE Transactions on Geoscience and Remote Sensing
  \textbf{45}(12),  3867--3879 (2007). \doi{10.1109/TGRS.2007.898443}

\bibitem{johansson2006application}
Johansson, B., Elfving, T., Kozlov, V., Censor, Y., Forss{\'e}n, P.E.,
  Granlund, G.: The application of an oblique-projected landweber method to a
  model of supervised learning. Mathematical and computer modelling
  \textbf{43}(7-8),  892--909 (2006)

\bibitem{lawson1974solving}
Lawson, C.L., Hanson, R.J.: Solving least squares problems. Prentice-Hall
  Series in Automatic Computation (1974)

\bibitem{Massias2020}
Massias, M., Vaiter, S., Gramfort, A., Salmon, J.: Dual extrapolation for
  sparse glms. Journal of Machine Learning Research  \textbf{21}(234),  1--33
  (2020)

\bibitem{Ndiaye2017}
Ndiaye, E., Fercoq, O., Gramfort, A., Salmon, J.: Gap safe screening rules for
  sparsity enforcing penalties. Journal of Machine Learning Research
  \textbf{18}(128),  1--33 (Nov 2017)

\bibitem{polyak2015projected}
Polyak, R.A.: Projected gradient method for non-negative least square. Contemp
  Math  \textbf{636},  167--179 (2015)

\bibitem{Rockafellar1970}
Rockafellar, R.T.: Convex analysis. Princeton University Press (1970)

\bibitem{slawski2013non}
Slawski, M., Hein, M.: Non-negative least squares for high-dimensional linear
  models: Consistency and sparse recovery without regularization. Electronic
  Journal of Statistics  \textbf{7},  3004--3056 (2013)

\bibitem{stark1995bounded}
Stark, P.B., Parker, R.L.: Bounded-variable least-squares: {A}n algorithm and
  applications. Computational Statistics  \textbf{10},  129--129 (1995)

\bibitem{van2004fast}
Van~Benthem, M.H., Keenan, M.R.: Fast algorithm for the solution of large-scale
  non-negativity-constrained least squares problems. Journal of Chemometrics: A
  Journal of the Chemometrics Society  \textbf{18}(10),  441--450 (2004)

\end{thebibliography}
\bibliographystyle{splncs04} 

\section*{Ethical Statement}
This work contains no sensible or private data. No direct harmful application to society has been identified by the authors.
The research was conducted with a commitment to ethical principles and scientific rigor. 
No human or animal subject were directly involved in this work.
%
The researchers aimed to ensure that the results of this study are accurate, transparent and reproducible, and that they contribute to the advancement of scientific knowledge and to the benefit of society. 
%

\newpage

\appendix

\section{Relation to $\ell_\infty$-regularization} \label{app:L_infinity}

A particular case of \eqref{eq:LR}, obtained by setting $\vt{u} = -\vt{l} = c \vt{1}$, $c>0$, is the $\ell_\infty$-constrained optimization problem
\begin{align}\label{eq:BVLR-inf} 
    \x^\star  \in \argmin_{\x \in \R^n} ~ & \sum_{i=1}^m f([\A \x]_i; y_i) 
    \\ \st ~ & \|\x\|_\infty \leq c. \nonumber
\end{align}

The penalized counterpart of the above problem leads to the $\ell_\infty$-regularized linear regression problem considered by \cite{Elvira2020}%
\begin{align}\label{eq:L_infinity_LR} 
    \x^\star  \in \argmin_{\x \in \R^n} ~ & \sum_{i=1}^m f([\A \x]_i;y_i)  + \lambda \|\x\|_\infty,
\end{align}
where the $\ell_\infty$-norm penalization is controlled by the parameter $\lambda>0$, to which corresponds a certain value of parameter $c$ in the previous constrained formulation. Note that, in \cite{Elvira2020}, only the least-squares case is addressed. 

\section{Dual problems derivation} \label{app:dual_problems}

As a direct application of the Fenchel duality formalism \cite[Theorem 31.3]{Rockafellar1970} 
and the coordinate-wise separability property of the convex conjugate \cite[Ch. X, Prop. 1.3.1]{Hiriart-Urruty1993a} 
we have the following pair of generic primal-dual problems
\begin{align}
    \label{eq:generic_primal}
    \x^\star &\in \argmin_{\x \in \R^n} 
    \overbrace{\sum_{i=1}^m f([\A \x]_i;y_i)}^{F(\A\x;\y)} + \Omega(\x) ,\\
    \label{eq:generic_dual}
    \vtheta^\star &= \argmax_{\vtheta \in \R^m} 
    -\underbrace{\sum_{i=1}^m f^*(-\vthetai;y_i)}_{F^*(-\vtheta;\y)} - \Omega^*(\A^\T \vtheta),
\end{align}
where
$\Omega(\x) = \ind_{\x \in [\vt{l},\vt{u}]}$.
%
To complete the demonstration, we need to calculate the  conjugate function $\Omega^*$ of~$\Omega$.


From the definition of the conjugate function and the separability of $\Omega$, we have
\begin{align*}
     \Omega^*(\vt{w}) & = \sup_{\x \in \R^n} \dotp{\x}{\vt{w}} + \Omega(\x) \\
     & = \sum_{j \in [n]} \sup_{x_j \in [l_j,u_j]} x_j w_j \\
     & =  \sum_{j \in [n]\backslash \s{J}_\infty^u} \sup_{x_j \in [l_j,u_j]} x_j w_j + \sum_{j \in \s{J}_\infty^u} \sup_{x_j > l_j} x_j w_j.
\end{align*}

The first term is a sum of sup of linear functions over compact sets. As such, each sup is attained on the boundary. If $w_j<0$, the sup is attained at $x_j=l_j$ whereas if $w_j>0$, it is attained at $x_j=u_j$. Overall, the value of each sup is given by $l_j (w_j)^- + u_j (w_j)^+$.

The second term is a sum of sup of linear functions over umbounded sets. Each  sup is either attained at $x_j = l_j$ if $w_j <0$ or equals $+\infty$ if $w_j >0$. Each sup is thus equal to $l_j (w_j)^- + \ind_{w_j \leq 0}$.

Combining all these expressions, we obtain
\begin{equation}
    \Omega^*(\vt{w}) = \sum_{j \in [n]} l_j (w_j)^- + \mkern-10mu \sum_{j \in [n]\backslash \s{J}_\infty^u}  u_j (w_j)^+ + \ind_{\vt{w} \leq 0},
\end{equation}
which, with~\eqref{eq:generic_dual}, completes the proof.




\section{First-order optimality conditions}\label{sec:opt_cond}

First-order optimality conditions for the generic primal-dual pair \eqref{eq:generic_primal}-\eqref{eq:generic_dual} are given by \cite[Theorem 31.3]{Rockafellar1970}: 
\begin{align}
    \vtheta^\star & = -\nabla F(\A\x^\star;\y), \\
    \A^\T \vtheta^\star &\in \partial \Omega(\x^\star).
\end{align}
For $F(\A\x;\y) = \sum_{i=1}^m f([\A \x]_i;y_i)$ and $\Omega(\x) = \sum_{j=1}^n \Omega_j(\xj)$ coordinate-wise separable, it simplifies to:
\begin{align}
    &\forall i \in [m],~ \vthetai^\star = -f'([\A\x^\star]_i;y_i),\\
    %
    \label{eq:generic_optimality2}
    &\forall j \in [n],~ \cA_j^\T \vtheta^\star \in \partial \Omega_j(\xj^\star).
\end{align}

Finally, the proof is completed with the expression of the sub-differential of $\Omega_j(\xj) = \ind_{x_j \in [l_j, u_j]}$,
\begin{align*}
    \partial \Omega_j(\xj) = 
    \left\lbrace
    \begin{array}{cl}
         \left(-\infty, 0 \right] & \text{if } ~\xj= l_j \\
         \left[ 0, +\infty \right) & \text{if } ~\xj= u_j \\
         \{0\} & \text{if } ~\xj  \in (l_j, u_j) 
    \end{array} \right.    
\end{align*}
where the second case only occurs when $u_j < \infty$.

\end{document}